\documentclass{aei} 
\pdfoutput=1
\usepackage[T1]{fontenc}
\usepackage{multicol,cite,url,color}
\usepackage{supertabular}
\usepackage{mathptmx} 
\usepackage{amsmath}
\usepackage{amsmath}
\usepackage{amssymb}
\usepackage{algorithm}
\usepackage{amsthm}
\usepackage{mathtools}
\usepackage{algorithm}
\usepackage{algpseudocode}
\usepackage{booktabs}

\usepackage{soul}
\definecolor{softyellow}{RGB}{255,245,180}  
\sethlcolor{softyellow}

\usepackage{upgreek} 
\usepackage{ifpdf}
\ifpdf \usepackage[pdftex]{graphicx}
\else\usepackage[dvips]{graphicx}\fi
\DeclareGraphicsExtensions{.pdf,.png,.jpg,.mps,.eps}
\graphicspath{{figures/}}
\newtheorem{theorem}{Theorem}[section]
\newtheorem{definition}{Definition}[section]

\newtheorem{remark}{Remark}[section]

\usepackage[pdftex,unicode,bookmarks=false,hidelinks]{hyperref}
\hypersetup{%
baseurl={http://www.aei.tuke.sk/},
pdfcreator={pdfcsLaTeX},
pdfkeywords={aei, tuke},
pdftitle={MULTI-AGENT ROUTE PLANNING AS A QUBO PROBLEM},
pdfauthor={Mgr. Ren\'ata Rusn\'akov\'a,
http://people.tuke.sk/renata.rusnakova},
pdfsubject={Multi-Agent Route Planning as a QUBO Problem}
}
\begin{document}
\pagerange{1}{2}
\aeino{Acta Electrotechnica et Informatica, Vol. 23, No. X, 2023}
\doino{DOI: xx.xxxx/xxxxxx-xxx-xxxx-x}
\title{Multi-Agent Route Planning as a QUBO Problem}
\headtitle{%
Multi-Agent Route Planning as a QUBO Problem
}
\author{
Ren\'ata RUSN\'AKOV\'A$^*$ 
, Martin CHOVANEC$^*$ 
and Juraj GAZDA$^*$ 
}

\adresses{
$^*$Department of Computers and Informatics, Faculty of Electrical
Engineering and Informatics, Technical University of Ko\v{s}ice,
Letn\'a~9, 042\,00~Ko\v{s}ice, E-mail:
renata.rusnakova@tuke.sk\\
}
\babstract 
Multi-Agent Route Planning considers selecting vehicles, each associated with a single predefined route, such that route-level coverage utility is maximized while redundant spatial overlaps are limited.
This paper gives a formal problem definition, proves NP-hardness by reduction from the Weighted Set Packing problem, and derives a Quadratic Unconstrained Binary Optimization formulation whose coefficients directly encode route utility rewards and pairwise overlap penalties.
A single penalty parameter $\lambda$ controls the coverage--overlap trade-off. We distinguish between a soft regime, which supports multi-objective exploration, and a hard regime, in which the penalty is strong enough to effectively enforce near-disjoint routes.
We describe a practical pipeline for generating city instances, constructing candidate routes, building the QUBO matrix, and solving it with a binary quadratic programming baseline (Gurobi), simulated annealing, and D-Wave hybrid quantum annealing.
Experiments on Barcelona instances with up to $10{,}000$ vehicles reveal a clear coverage--overlap knee and show that Pareto-optimal solutions are mainly obtained under the hard-penalty regime, while D-Wave hybrid solvers and Gurobi achieve very similar objective values on matching configurations with only minor runtime differences as problem size grows.
\eabstract 
\keywords{%
multi-agent routing, NP-hardness, optimization, Pareto frontier, quantum annealing, QUBO
}
\begin{multicols}{2}[][0pt]

\section{INTRODUCTION}
Modern cities face the challenge of coordinating multiple vehicles, such as taxis or delivery vans, to cover as much territory as possible without unnecessary redundancy. In the Multi-agent Route Planning (MaRP) problem considered here, each vehicle in a fleet is associated with a fixed route through a road network, and the central decision is which vehicles should be selected. The optimization model rewards the route-level coverage utility of selected vehicles and penalizes spatial overlap between their routes. Actual network coverage, measured as the union of nodes reached by the selected routes, is then evaluated after optimization as a separate performance metric. Deploying more vehicles usually increases coverage, but also raises the risk of overlap, leading to redundant traversal and potential congestion. This trade-off is central to MaRP and closely related to set packing problems in combinatorial optimization \cite{Hochbaum1982, Hoffman2024}, from which we later establish a polynomial-time reduction showing NP-hardness of the general set-based formulation.

The MaRP problem lies at the intersection of multi-agent path planning and coverage optimization. 
Related work in multi-agent pathfinding (MAPF) focuses primarily on collision avoidance and joint cost minimization under temporal constraints, with algorithms such as Conflict-Based Search providing optimal solutions for coordinated motion planning \cite{Sharon2015, Stern2019}. 
In coverage-oriented settings, multi-vehicle planning methods aim to maximize spatial reach while reducing redundant traversal and energy consumption, particularly in robotic and cooperative systems \cite{Galceran2013}. 
Recent studies have also explored quantum and hybrid optimization techniques for multi-vehicle coverage and routing problems, highlighting both the NP-hard nature of these tasks and the potential of tailored quantum heuristics \cite{Rao2022}. 
In contrast to MAPF, MaRP does not model time windows or collision avoidance explicitly; instead, it interprets route overlap as a proxy for congestion and regulates the coverage-overlap trade-off through a tunable penalty parameter.
These works motivate our focus on coordinated route selection and overlap minimization in MaRP, while our QUBO-based formulation and solver evaluation provide a complementary optimization perspective.

We model the MaRP vehicle selection problem as a binary optimization task, where each candidate vehicle corresponds to a binary decision variable indicating whether its route is selected. The objective function rewards a precomputed route-level coverage utility and penalizes pairwise route overlaps, with a tunable parameter $\lambda$ controlling the balance between these competing goals. This structure induces quadratic interactions between decision variables through shared network elements, naturally fitting the Quadratic Unconstrained Binary Optimization (QUBO)/Ising formulation \cite{Lucas2014, Glover2019, Kadowaki1998}.

In this article, we provide a formal definition of the MaRP problem and reformulate it as a QUBO. Our solution pipeline includes generating the road network from OpenStreetMap, assigning each vehicle a single precomputed route obtained from the Valhalla routing engine, and constructing the corresponding QUBO coefficient matrix. Experiments are conducted on the Barcelona city center, where we apply multiple solvers: Gurobi as a binary quadratic programming baseline, simulated annealing as a classical heuristic, and D-Wave's hybrid quantum annealer. We calculate and systematically explore both soft and hard $\lambda$ parameter regimes to evaluate the resulting coverage--overlap trade-off under different penalty settings. To evaluate solutions, we define several metrics that quantify realized coverage, overlap distribution, and post-optimization graph properties.

The key contributions of this article are:
\begin{itemize}
\item We provide a formal definition of the MaRP problem and establish its NP-hardness via a polynomial-time reduction from the Weighted Set Packing problem (WSP).
\item We derive a QUBO formulation of MaRP with interpretable coefficients that explicitly encode route-level coverage rewards and pairwise overlap penalties, and distinguish between soft ($\lambda_{\text{soft}}$) and hard ($\lambda_{\text{hard}}$) penalty regimes for exploring the coverage--overlap trade-off.
\item We design a reproducible experimental pipeline using OpenStreetMap road networks and Valhalla-generated vehicle routes, and evaluate the model on large-scale instances from the Barcelona city center.
\item We present a comparative empirical study of Gurobi, heuristic, and hybrid quantum solvers, showing that hybrid quantum annealing achieves objective values very close to the Gurobi baseline on matching configurations, with runtime scaling that remains practical for the tested problem sizes.
\end{itemize}

The remainder of the paper is organized as follows. Section~\ref{sec:problem} formally defines the MaRP problem and its inputs and outputs. Section~\ref{sec:np} proves the NP-hardness of MaRP. Section~\ref{sec:qubo} introduces the QUBO formulation and the role of the penalty parameters $\lambda_{\text{soft}}$ and $\lambda_{\text{hard}}$. Section~\ref{sec:simulation} describes implementation details, evaluation metrics, and solver characteristics. The experimental results are provided in Section~\ref{sec:results}, and finally Section~\ref{sec:conclusion} discusses practical applications and concludes the paper.

\section{Problem Definition}
\label{sec:problem}
MaRP focuses on the coordination of a fleet of vehicles moving through a shared network, with the goal of ensuring that the system as a whole achieves efficient coverage while avoiding unnecessary redundancy. Unlike problems where new routes must be designed, in MaRP each vehicle is typically assumed to have a predefined path based on operational constraints, such as a given origin and destination. The central decision is therefore not how each vehicle should travel, but rather \textit{which vehicles should be selected} so that their combined activity yields the greatest benefit for the network.

Formally, let us consider $n$ vehicles indexed by $i = 1,\dots,n$. Each vehicle $i$ follows a predefined route covering a subset of the network's elements $e \in U$ (nodes), denoted $S_i \subseteq U$, where $U$ is the set of all relevant locations in the network. We define the following quantities:
\begin{itemize}
\item $u_i$: the \emph{coverage reward} assigned to vehicle $i$ is defined as the individual contribution of route $S_i$ to the network coverage before interactions with other selected routes are considered: $u_i = |S_i|$, where $S_i$ contains the distinct network nodes visited by the route.\footnote{Note that $\sum_i u_i x_i$ may count a shared node multiple times; the overlap penalty term partially corrects for this redundancy in the objective, while actual set-union coverage is measured separately as a post-optimization metric (see Section~\ref{subsec:metics}).}
\item $c_{ij}$: the \emph{overlap} between vehicle $i$ and $j$, defined as the number of elements common to routes of vehicles $i$ and $j$: $c_{ij} = |S_i \cap S_j|$ for $i \neq j$. This represents redundant spatial coverage provided by vehicles $i$ and $j$ if both are selected.\footnote{The overlap term is based on pairwise route interactions. As a result, the formulation does not distinguish between different higher-order patterns of overlap, such as many routes sharing one node versus several pairwise overlaps distributed across different nodes. This pairwise representation keeps the model tractable, but it may not fully capture higher-order congestion effects.}
\end{itemize}

Each vehicle $i$ is associated with a binary decision variable $x_i \in \{0,1\}$, where $x_i = 1$ means vehicle $i$ and its route $S_i$ are selected, and $x_i = 0$ means they are not used. The MaRP objective can then be expressed as maximizing route-level coverage reward minus a penalty for route overlap:
\begin{equation}
\label{obj}
\operatorname*{max}_{\mathbf{x} \in \{0,1\}^n}
\left(
\sum_{i=1}^{n} u_i x_i
-
\lambda \sum_{1 \le i < j \le n} c_{ij} x_i x_j
\right),
\end{equation}
where $\lambda > 0$ is a weighting parameter that controls the trade-off between rewarding useful routes and penalizing overlap. The first term $\sum_i u_i x_i$ represents a precomputed route utility of the chosen vehicles, and the second term $\lambda \sum_{i<j} c_{ij} x_i x_j$ subtracts a penalty for each unit of overlap between any two chosen vehicles. By tuning $\lambda$, we can emphasize one objective over the other: a larger $\lambda$ penalizes overlaps more severely, favoring solutions with little to no overlap, whereas a smaller $\lambda$ puts relatively more weight on route utility and allows more redundancy.
The solution to the MaRP problem is an optimal binary selection vector $\mathbf{x}^* = \left( x_1^*,\dots,x_n^*\right)$, $x_i^* \in \{0,1\}$, that maximizes the defined objective in Eq.~\eqref{obj}.

If each vehicle has only a single route, then this captures the scenario completely. In cases where each vehicle has $k$ possible routes, indexed locally by $r = 1,\dots,k$ for each vehicle $i$, the model can be extended by introducing a binary variable $x_{i,r}$ for each route option $(i,r)$ and adding a constraint that at most one route per vehicle is selected:
\begin{equation}
\sum_{r=1}^{k} x_{i,r} \leq 1, \quad \forall i.
\end{equation}
This is an at-most-one route selection constraint. If exactly one route must be chosen for every vehicle, the equality form $\sum_{r=1}^{k} x_{i,r}=1$ gives the usual one-hot or one-of-$k$ constraint commonly used in QUBO formulations of assignment and routing problems~\cite{Lucas2014,Glover2019}. Such constraints can be incorporated into QUBO models as quadratic penalty terms. In the development below, we assume for simplicity that each vehicle has a single candidate route.

\section{NP-Hardness via Reduction from Weighted Set Packing}
\label{sec:np}
MaRP belongs to a class of combinatorial optimization problems that are computationally challenging. We now prove that selecting an optimal set of vehicles in MaRP is NP-hard by giving a polynomial-time reduction from the Weighted Set Packing (WSP) problem \cite{Karp1972, Garey1979}. WSP is a well-known NP-hard problem (a generalization of the NP-complete Set Packing problem) defined as follows:

\begin{definition}[Weighted Set Packing]
\label{def:WSP}
Given a universe $U$ and a family of sets $S_1,\dots,S_m \subseteq U$ with weights $w_i \ge 0$ for $i=1,\dots,m$, the \emph{Weighted Set Packing} problem asks to select a subfamily of pairwise disjoint sets maximizing the total weight. Formally, introducing binary variables $x_i \in \{0,1\}$ indicating whether set $S_i$ is selected, the goal is to
\begin{equation}
\label{wsp}
\begin{aligned}
\max \quad & \sum_{i=1}^m w_i x_i \\
\text{s.t.} \quad 
& x_i \in \{0,1\}, \\
& S_i \cap S_j = \emptyset \quad \text{whenever } x_i = x_j = 1 .
\end{aligned}
\end{equation}
\end{definition}


\begin{theorem}\label{thm:nphard}
The MaRP optimization problem in Eq.~\eqref{obj} is NP-hard.
\end{theorem}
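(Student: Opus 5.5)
The plan is a polynomial-time reduction from the decision version of (Weighted) Set Packing, following Definition~\ref{def:WSP}, in two layers. First I would treat the objective in Eq.~\eqref{obj} abstractly, with the rewards $u_i$ and overlaps $c_{ij}=|S_i\cap S_j|$ as input data. Second I would handle the restriction $u_i=|S_i|$ imposed in Section~\ref{sec:problem} by padding the sets, so that the reduction also holds for MaRP exactly as defined.

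For the abstract version, given a WSP instance $(U,S_1,\dots,S_m,w)$ with integer weights, I set $n=m$, keep the same sets (so $c_{ij}=|S_i\cap S_j|$), put $u_i=w_i$, and choose $\lambda = 1+\max_i w_i$.

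\emph{Key step (exchange argument).} Let $\mathbf{x}$ be any selection in which some selected pair $i,j$ has $c_{ij}\ge 1$. Setting $x_i=0$ changes the objective by
\[
-u_i+\lambda\sum_{k\neq i} c_{ik}x_k \;\ge\; -w_i+\lambda \;>\;0 .
\]
Hence every optimal solution, and indeed any solution after repeated pruning, is pairwise disjoint. On pairwise disjoint selections the penalty term vanishes, so the MaRP objective equals the WSP weight. Therefore the MaRP optimum equals the WSP optimum. In particular, the WSP instance has a packing of weight $\ge t$ if and only if the MaRP optimum is $\ge t$. The construction uses $O(m^2|U|)$ time to compute the $c_{ij}$.

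\emph{Handling $u_i=|S_i|$.} To respect this definition, I would reduce from unweighted Set Packing, which is NP-complete~\cite{Karp1972}. Let $K=\max_i|S_i|$. Augment each $S_i$ with $K-|S_i|$ fresh private elements, obtaining $S_i'$. Private elements appear in only one set, so every overlap $c_{ij}$ is unchanged. Every reward becomes $u_i=|S_i'|=K$. Taking $\lambda=K+1$, the same exchange argument applies: removing one endpoint of an overlapping pair loses $K$ and gains at least $K+1$. The optimum is therefore $K$ times the maximum packing size, and "packing of size $\ge t$" corresponds to "MaRP value $\ge Kt$." The padding adds at most $mK\le m|U|$ elements, so it is polynomial.

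The main obstacle I expect is exactly this mismatch: WSP's weights are free, while MaRP's rewards are tied to the set sizes. The padding trick resolves it for the unweighted case. If a genuinely weighted reduction is wanted, I would first scale the weights by a large factor $M$ so that $Mw_i\ge|S_i|$, then pad each set to size $Mw_i$. Choosing $\lambda>\max_i Mw_i$ makes the exchange argument go through unchanged. I would double-check that this scaling stays polynomial when the weights are encoded in unary. Otherwise I would simply rely on the unweighted reduction, since it already establishes NP-hardness.
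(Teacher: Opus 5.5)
Your proposal is correct. Its core is the same exchange argument the paper uses. The paper likewise sets $u_i := w_i$ and $c_{ij} := |S_i\cap S_j|$, and shows that dropping one endpoint of an overlapping pair loses at most $w_i$ of reward while removing at least $\lambda$ of penalty, so every optimal solution is a packing.

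The differences are as follows.
\begin{itemize}
\item The paper takes $\lambda := 1+\sum_k w_k$, the value later reused as $\lambda_{\text{hard}}$. Your $\lambda = 1+\max_i w_i$ already suffices, because the exchange only compares against a single vehicle's reward.
\item More substantively, the paper never reconciles $u_i := w_i$ with the definition $u_i = |S_i|$ from Section~\ref{sec:problem}. In effect it proves hardness for a generalized model in which the $u_i$ are free input data rather than set sizes. Your padding with private elements leaves every $c_{ij}$ unchanged while forcing $u_i = |S_i'| = K$. You therefore obtain hardness for the objective exactly as defined, which is a special case and hence a stronger result.
\item The price is that you reduce from unweighted Set Packing rather than WSP. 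Since unweighted Set Packing is already NP-complete, nothing is lost, and the weighted scaling extension is unnecessary. If you keep it, discard sets with $w_i=0$ first: $Mw_i \ge |S_i|$ cannot hold for them when $S_i \neq \emptyset$.
\item Your decision-version phrasing (equal optima, hence ``packing of size $\ge t$'' iff ``value $\ge Kt$'') is a bit more careful than the paper's statement that optimal solutions ``correspond directly''.
\end{itemize}
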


\begin{proof}
Let $(U,\{S_i\}_{i=1}^m,\{w_i\}_{i=1}^m)$ be an arbitrary instance of WSP with $w_i \ge 0$.

We construct a corresponding MaRP instance with $n = m$ candidate vehicles. Each vehicle $i$ is assigned a route covering exactly the elements of $S_i$, and its coverage reward is set to $u_i := w_i$. For every pair $i \neq j$, we define the overlap coefficient as $c_{ij} := |S_i \cap S_j|$. The penalty parameter is chosen as
$\lambda := 1 + \sum_{k=1}^m w_k,$
which is strictly larger than the total route reward that can be obtained by selecting all vehicles.
Consider any feasible MaRP solution $\mathbf{x} \in \{0,1\}^m$. Suppose that $\mathbf{x}$ selects two vehicles $i \neq j$ such that $x_i = x_j = 1$ and $c_{ij} \ge 1$, i.e., their routes overlap. We show that such a solution cannot be optimal.
If we remove vehicle $i$ from the solution (i.e., set $x_i = 0$), the coverage term decreases by at most $u_i = w_i \le \sum_{k=1}^m w_k$. On the other hand, the overlap penalty decreases by at least $\lambda$, since at least one overlapping pair is eliminated. By the choice of $\lambda$, the decrease in penalty strictly outweighs the loss in coverage, and the objective value strictly improves. Therefore, any solution containing overlapping routes cannot be optimal, i.e., every optimal MaRP solution must select only vehicles with pairwise disjoint routes. Under this condition, all overlap terms vanish, and the objective function reduces to
$\sum_{i=1}^m u_i x_i = \sum_{i=1}^m w_i x_i,$
which is exactly the objective of the WSP problem.

Thus, any optimal solution of the constructed MaRP instance corresponds directly to an optimal solution of the original WSP instance. Since the transformation can be carried out in polynomial time and WSP is NP-hard, it follows that MaRP is NP-hard.
\end{proof}

\begin{remark}
The reduction proves NP-hardness for the general set-based MaRP model, where routes are treated as subsets of network elements. We use a more structured version, where routes are valid paths in an urban road network. Although this road-network variant is more specific, it keeps the same route-selection objective with pairwise overlap penalties. The reduction therefore supports the computational difficulty of the general model.
\end{remark}

\section{QUBO Formulation of MaRP}
\label{sec:qubo}
To solve MaRP using optimization solvers, we encode the vehicle selection problem as a Quadratic Unconstrained Binary Optimization (QUBO) model \cite{Punnen2022, Lewis2017}. In our formulation, the QUBO is represented through its diagonal and upper-triangular coefficients, so that each pairwise interaction is counted only once. Constructing a QUBO for MaRP involves translating the route-level coverage reward and overlap penalty into these coefficients.

\subsection{Encoding Coverage and Overlap}
We introduce one binary variable $x_i$ for each vehicle $i =1, \ldots, n$. To rewrite the objective from \eqref{obj} in QUBO form, we define the QUBO matrix $\mathbf{Q} \in \mathbb{R}^{n \times n}$ with entries:
\begin{itemize}
\item $Q_{ii} = -u_i$ for each $i=1,\dots,n$. This sets a negative linear coefficient for selecting vehicle $i$, representing the route utility benefit. In the QUBO minimization view, selecting route $i$ contributes $Q_{ii} x_i = -u_i$ (a negative cost) to the objective, which encourages its selection.
\item $Q_{ij} = \lambda c_{ij}$ for each pair $i<j$. This sets a positive quadratic coupling between $i$ and $j$ equal to $\lambda$ times their overlap. If both $x_i$ and $x_j$ are 1, the term $Q_{ij}x_i x_j = \lambda c_{ij}$ adds a positive cost $\lambda c_{ij}$, penalizing the joint selection of two overlapping routes.
\end{itemize}
All other entries are zero. Given this $\mathbf{Q}$, the QUBO objective function to be minimized is
\begin{equation}
f(\mathbf{x})
=
\sum_{i=1}^{n} Q_{ii}\,x_i
+
\sum_{1 \le i < j \le n} Q_{ij}\,x_i x_j \, .
\end{equation}

Substituting the $\mathbf{Q}$ entries, this becomes 

\begin{equation}
\label{eq:qubo_min}
    f(\mathbf{x}) = -\sum_{i} u_i x_i + \lambda \sum_{i<j} c_{ij} x_i x_j .
\end{equation}
Minimizing $f(\mathbf{x})$ is directly equivalent to maximizing the original objective in Eq.~\eqref{obj} and the problem is now encoded as a standard QUBO: find $\mathbf{x} \in \{0,1\}^n$ that minimizes $f(\mathbf{x})$.

\subsection{Penalty Weights: $\lambda_{\text{soft}}$ vs. $\lambda_{\text{hard}}$}
\label{subsec:penalties}

The penalty parameter $\lambda$ plays a crucial role in the QUBO model, as it governs the balance between maximizing route utility and minimizing overlap. We consider two regimes for penalty parameter $\lambda$.\\

\noindent
\textbf{Hard penalty ($\lambda_{\text{hard}}$):} Here $\lambda$ is set to an extremely large value, effectively turning the overlap penalty into a hard constraint. This mirrors the theoretical reduction in Theorem~\ref{thm:nphard}, where we chose $\lambda$ greater than the sum of all coverage rewards to forbid overlapping routes in any optimal solution, 
\begin{equation}
    \lambda_{\text{hard}} = 1 + \sum_i u_i .
\end{equation}
With $\lambda_{\text{hard}}$ the QUBO solver will strongly favor solutions with no pairwise overlaps, even if that means sacrificing considerable coverage. \\

\noindent
\textbf{Soft penalty ($\lambda_{\text{soft}}$):} In contrast, a moderate or small $\lambda$ treats overlap as a soft constraint, allowing some redundancy if it significantly increases route utility. Selecting the right $\lambda_{\text{soft}}$ is non-trivial. We therefore use a scale-aware heuristic that compares a typical route reward with a typical total overlap level. Let
\begin{equation}
s_i=\sum_{j\neq i} c_{ij}
\end{equation}
denote the aggregate pairwise overlap of route $i$. We define
\begin{equation}\label{eq:lambdasoft}
\lambda_{\mathrm{soft}} =
\frac{\operatorname{median}_i u_i}
{\max\{1,\operatorname{median}_i s_i\}}.
\end{equation}
In practical terms, this adaptive choice of $\lambda_{\text{soft}}$ yields solutions that are well balanced with respect to the scale of the underlying instance, avoiding both overly permissive and overly restrictive behavior. It allows the solver to naturally trade small amounts of overlap for meaningful coverage gains when beneficial. From a broader perspective, this formulation can be viewed as a scalarized multiobjective optimization problem \cite{Ehrgott2005}, where coverage maximization and overlap minimization represent competing objectives combined through the penalty parameter.

\section{Simulation Setup and Metrics}
\label{sec:simulation}
To test the QUBO-based approach in a realistic setting, we built a pipeline that generates a city network, route instances, and formulates the corresponding QUBO, which is then solved using different solvers. To evaluate the resulting solutions, we use a set of metrics that capture not only overall coverage and route overlap, but also structural properties of the road network induced by the selected vehicles. 

\subsection{Instance Generation and Implementation}
\label{subsec:pipeline}
Road networks are extracted from OpenStreetMap using OSMnx \cite{Boeing2017}.
For each experiment, a city region of Barcelona is defined by a center and radius, the network graph is downloaded, and stored in a MariaDB database for reproducibility.
Vehicles are generated by sampling origins and destinations on the network. 
The number of vehicles varies across experiments to study scalability, ranging from 100 to 10,000 vehicles. Smaller instances are generated within a radius of 1 km around the city center, while larger instances use an expanded radius of 5 km to accommodate the increased number of vehicles without excessive route congestion.
Each vehicle is assigned one route extracted from the Valhalla routing engine \cite{valhalla}, 
with route represented as a set of network nodes. The node-to-route indexing is used to compute route-level rewards $u_i$ and pairwise overlap coefficients $c_{ij}$.
The resulting QUBO is then solved for different $\lambda$ regimes, and solutions from all solvers together with the associated evaluation metrics are stored for each simulation run.\footnote{The implementation is written in Python and maintained in a GitHub repository; access can be provided upon reasonable request.}

\subsection{Solvers}
\label{subsec:solvers}

Having formulated MaRP as a QUBO, we can leverage different solvers to obtain optimal or near-optimal solutions. In this study, we consider three representative solution approaches commonly used for WSP and related NP-hard combinatorial optimization problems.

\begin{enumerate}
    \item \textbf{Gurobi binary quadratic-programming baseline.}  
    The QUBO objective Eq.~\eqref{eq:qubo_min} is solved as a binary quadratic program using the Gurobi Optimizer \cite{gurobi}. Computations are performed using Gurobi's cloud-based execution environment accessed through the Python \texttt{gurobipy} interface. For each instance, the solver runtime is capped at a fixed wall-clock limit of 10 minutes. If optimality is proven within this limit, the returned solution is globally optimal; otherwise, the best feasible solution available at termination is used together with the corresponding solver status and optimality gap when available. 
    \item \textbf{Simulated annealing (SA).}  
    Simulated annealing is employed as a classical heuristic method based on stochastic local search with a cooling schedule \cite{Kirkpatrick1983}. We use D-Wave's open-source \texttt{neal} library with default parameters (\texttt{num\_reads = 100}, \texttt{sweeps = 1000}, and automatically determined annealing schedule). SA is executed locally on a workstation equipped with an AMD Ryzen~5~5500U CPU (2.10\,GHz) and 16\,GB RAM. No explicit runtime limit is imposed; each run terminates after completion of the prescribed number of sweeps.
    \item \textbf{Hybrid quantum annealing (QA).}  
    Hybrid quantum annealing is performed using D-Wave's cloud-based hybrid BQM solver via the Ocean SDK \cite{DWaveHybrid}. The MaRP QUBO is submitted directly in Binary Quadratic Model (BQM) form. The solver combines classical decomposition techniques with quantum annealing subroutines to handle large QUBO instances. All runs are executed with default solver parameters, including the default time limit and internal decomposition strategy selected automatically by the solver. The underlying quantum hardware is based on the Advantage2 system with Zephyr topology, which offers high qubit connectivity \cite{DWaveAdvantage2}. Although minor embedding is managed internally within the hybrid framework, embedding-related overhead remains a practical consideration for dense logical QUBOs \cite{Choi2008}.

\end{enumerate}

\subsection{Post-optimization Evaluation Metrics}
\label{subsec:metics}

To evaluate solution quality for MaRP, we use a set of metrics that quantify both network coverage and route overlap under different penalty settings. These metrics, computed from node-level usage statistics, characterize how effectively and evenly the selected vehicle routes cover the city network.

Let $N$ denote the set of network nodes visited by at least one selected route, and let $u_v$ be the number of vehicles whose routes pass through node $v$.

\paragraph{Average node overlap.}
To characterize the overall level of overlap in the network, we measure the average number of vehicles traversing a node. The average node overlap is defined as
\begin{equation}
\bar{u} = \frac{1}{|N|} \sum_{v \in N} u_v .    
\end{equation}
This metric provides a concise measure of redundancy in the selected routes. Values close to one indicate minimal overlap, whereas larger values reflect increasing concentration of traffic on shared network nodes.

\paragraph{Top 10\% overlap share.}
Nodes are ranked in descending order according to their overlap values. 
Let $K = \lceil 0.1\,|N| \rceil$, and let $N_K \subseteq N$ denote the set of the top 10\% of nodes (i.e., the $K$ nodes) with the highest values of $u_v$. The top 10\% overlap share is defined as
\begin{equation}
S_{10} = \frac{\sum_{v \in N_K} u_v}{\sum_{v \in N} u_v}.
\end{equation}
This measures the fraction of total route overlap contributed by the most frequently traversed nodes. Lower values indicate a more evenly distributed overlap across the network.

\paragraph{Overlap graph density and average degree.}
An overlap graph is constructed in which vertices correspond to selected vehicle routes and edges indicate shared nodes. If the graph, modeled as an undirected simple graph, contains $n_s$ selected-route vertices and $m_s$ edges, its density and average degree are
\begin{equation}
\delta = \frac{2m_s}{n_s(n_s-1)}, \qquad \bar{d} = \frac{2m_s}{n_s}.
\end{equation}

\paragraph{Normalized Shannon entropy~\cite{Shannon1948}.}
A probability distribution over network nodes is obtained by normalizing the node overlap counts,
\begin{equation}
p_v = \frac{u_v}{\sum_{v' \in N} u_{v'}}.
\end{equation}
The normalized Shannon entropy is defined as
\begin{equation}
H_{\mathrm{norm}} =
\frac{-\sum_{v \in N} p_v \log(p_v)}{\log(|N|)}.
\end{equation}
Values close to $1$ indicate that overlap is distributed nearly uniformly across the network, whereas smaller values reflect increasing concentration on a limited subset of nodes.

\paragraph{Herfindahl--Hirschman Index (HHI)~\cite{Brezina2016}.}
Overlap concentration is also quantified using the Herfindahl--Hirschman Index,
\begin{equation}
\mathrm{HHI} = \sum_{v \in N} p_v^2.
\end{equation}
Larger values of HHI indicate stronger concentration of overlap on a small number of nodes, while smaller values correspond to a more evenly distributed use of the network.

\paragraph{Coverage percentage.}
Coverage is measured as the fraction of network nodes covered after optimization relative to a pre-optimization configuration
\begin{equation}
\mathrm{Pct}_{\mathrm{cov}} = 100 \cdot 
\frac{|N_{\text{selected}}|}{|N_{\text{pre-optimization}}|},
\end{equation}
where $N_{\text{selected}} = \{ v : u_v > 0 \}$ denotes the set of nodes covered by the selected vehicles (corresponding to the set $N$ defined above), while $N_{\text{pre-optimization}}$ corresponds to the set of nodes covered when all candidate routes are considered. Values close to $100\%$ indicate that most of the original coverage is preserved.

\paragraph{Overlap percentage.}
Overlap is quantified as the ratio of total route overlap after optimization to that observed in the pre-optimization configuration
\begin{equation}
\mathrm{Pct}_{\mathrm{ov}} = 100 \cdot 
\frac{\mathrm{overlap}_{\text{selected}}}{\mathrm{overlap}_{\text{pre-optimization}}}.
\end{equation}
Here, $\mathrm{overlap}_{\text{selected}}$ denotes the total pairwise overlap between selected routes, computed as the sum of $c_{ij}$ over all pairs of selected routes, while $\mathrm{overlap}_{\text{pre-optimization}}$ represents the corresponding quantity when all routes are included. Lower values indicate a stronger reduction of overlap relative to the baseline.

\paragraph{Selected vehicle percentage.}
The proportion of vehicles retained by the optimization is given by
\begin{equation}
\mathrm{Pct}_{\mathrm{veh}} = 100 \cdot \frac{n_{\text{selected}}}{n_{\text{total}}},
\end{equation}
where $n_{\text{selected}}$ denotes the number of vehicles selected in the optimized solution and $n_{\text{total}}$ is the total number of candidate vehicles in the instance.

\paragraph{Objective value (energy).}
The objective value corresponds to the QUBO energy evaluated at the solution returned by a given solver for a fixed value of $\lambda$, capturing the trade-off between coverage rewards and overlap penalties encoded in the model.

\section{Experimental Results}
\label{sec:results}

We begin by comparing the behavior of the considered solvers in terms of runtime and solution quality. 

\paragraph{Solver performance comparison:}
Table~\ref{tab:solver_comparison} compares solver runtimes and objective values across increasing fleet sizes. The hybrid QA solver shows notably stable runtimes for small and medium instances, remaining close to $3\,\mathrm{s}$ up to $1{,}000$ vehicles and increasing gradually as the problem size grows. In contrast, simulated annealing (SA) exhibits poor scalability, with runtimes rising sharply and exceeding one hour already at $2{,}000$ vehicles. Gurobi provides the fastest solutions for small instances, achieving sub-second runtimes, but its computational cost increases steadily and becomes comparable to hybrid QA for larger fleets.

With respect to solution quality, hybrid QA consistently attains objective values identical or very close to those obtained by Gurobi across all tested scales. Simulated annealing performs competitively only for the smallest problems, and as the instance size increases, both its runtime and solution quality deteriorate significantly. Overall, the results indicate that both the hybrid QA approach and Gurobi are well-suited for this class of problems: Gurobi provides exact solutions efficiently at smaller scales, where the reported optimality gap is zero, while hybrid QA maintains high solution quality with stable runtimes as problem size increases.

Based on the results, all metrics reported in this section are computed from solutions obtained with Gurobi, which serves as the optimization baseline. The corresponding solutions obtained with the hybrid QA solver were nearly identical across all tested instances in terms of objective value and resulting coverage--overlap trade-offs.

Having established solver equivalence in terms of objective quality, we now analyze the coverage--overlap trade-off induced by different values of the penalty parameter $\lambda$ using a Pareto-frontier perspective.\\

\end{multicols}

\btab
\centering
\footnotesize
\caption{Comparison of solver performance for Barcelona scenarios across different fleet sizes. Reported values are averaged over all runs and penalty parameters.}
\label{tab:solver_comparison}
\begin{tabular}{rcccccc}
\toprule
$n_{\text{veh}}$
& \multicolumn{3}{c}{Solver time (s)}
& \multicolumn{3}{c}{Objective value / Energy} \\
\cmidrule(lr){2-4}
\cmidrule(lr){5-7}
 & QA (hybrid) & SA & Gurobi
 & QA energy & SA energy & Gurobi obj. \\
\midrule
100    & 2.99  & 5.33      & 0.03  & $-3{,}063.92$    & $-3{,}063.92$    & $-3{,}063.92$ \\
200    & 2.99  & 18.51     & 0.11  & $-7{,}955.21$    & $-7{,}934.99$    & $-7{,}955.21$ \\
300    & 2.99  & 43.95     & 1.16  & $-16{,}561.49$   & $-15{,}957.58$   & $-16{,}561.49$ \\
500    & 2.99  & 187.70    & 1.19  & $-21{,}291.39$   & $-20{,}123.44$   & $-21{,}291.39$ \\
1,000  & 2.99  & 1{,}364.90 & 0.67  & $-25{,}998.28$   & $-24{,}665.73$   & $-25{,}998.28$ \\
2,000  & 5.23  & 3{,}728.31 & 2.65  & $-137{,}607.09$  & $-122{,}340.09$  & $-137{,}607.09$ \\
3,000  & 7.50  & 10{,}525.78& 4.09  & $-164{,}370.46$  & $-148{,}763.32$  & $-164{,}370.46$ \\
5,000  & 14.56 & --        & 11.19 & $-172{,}311.08$  & --               & $-172{,}311.08$ \\
7,000  & 24.69 & --        & 17.44 & $-703{,}610.19$  & --               & $-703{,}609.94$ \\
10,000 & 32.44 & --        & 35.12 & $-985{,}012.34$  & --               & $-985{,}012.34$ \\
\bottomrule
\end{tabular}
\etab

\bfig
    \centering
    \includegraphics[width=0.70\linewidth]{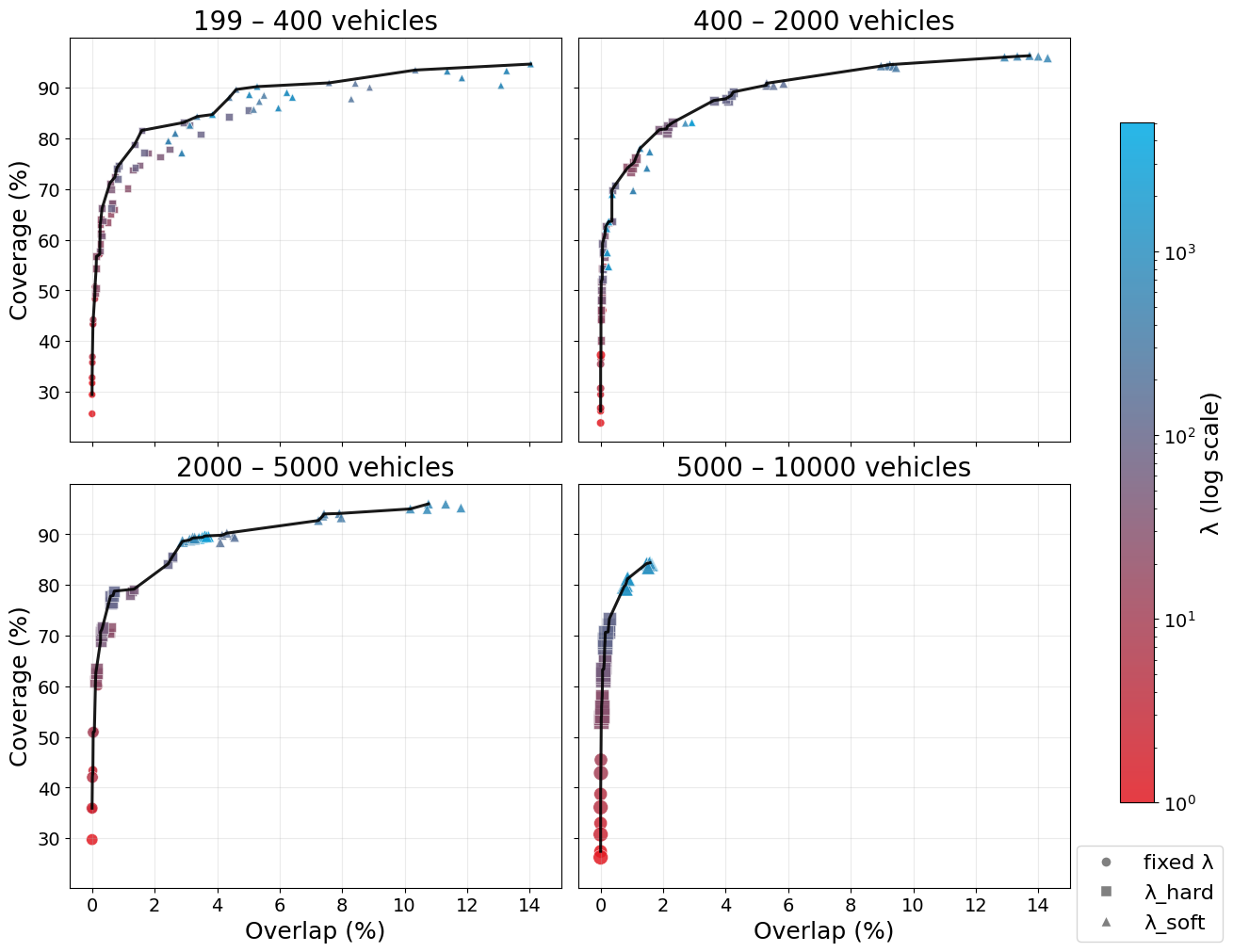}
    \caption{Coverage vs. Overlap with scale-specific Pareto frontiers.}
    \label{fig:coverage_overlap_pareto}
\efig

\begin{multicols}{2}

\paragraph{Pareto analysis of coverage and overlap:}
MaRP is inherently a bi-objective optimization problem, trading off network coverage against route overlap. In multi-objective optimization, a solution is considered \emph{Pareto optimal} if no other feasible solution can improve one objective without worsening at least one other objective \cite{Ehrgott2005}. The set of all non-dominated, Pareto-optimal solutions forms the Pareto frontier, representing the optimal trade-offs between conflicting objectives \cite{Giannelos2024}. In the context of MaRP, the parameter $\lambda$ acts as a weight that balances route utility and overlap in a single objective, effectively selecting one solution from the \emph{Pareto frontier} according to how strongly overlap reduction is prioritized. Consequently, there is no universally optimal value of $\lambda$; its choice reflects application-specific preferences.

Fig.~\ref{fig:coverage_overlap_pareto} summarizes the coverage--overlap trade-off stratified by problem scale, where instances are grouped into bins according to fleet size and a Pareto frontier is computed separately within each bin. Across all problem scales, the Pareto frontiers exhibit a pronounced knee: coverage improves rapidly up to approximately $75$–$85\%$ with minimal overlap, after which additional coverage gains incur a disproportionately higher overlap cost. As the fleet size grows, the frontier shifts downward and becomes steeper, indicating increasing difficulty in achieving high coverage without redundancy.
A consistent pattern is visible in the composition of the Pareto-optimal sets. The boundary of the trade-off is dominated almost entirely by solutions obtained under the $\lambda_{\text{hard}}$ formulation. This behavior follows directly from the definition of $\lambda_{\text{hard}}$ (Subsection~\ref{subsec:penalties}), which treats overlap as an almost hard constraint. As the problem size increases, such strict penalization becomes increasingly important for obtaining low-overlap Pareto-optimal solutions.

\bfig
    \centering
    \includegraphics[width=1.0\linewidth]{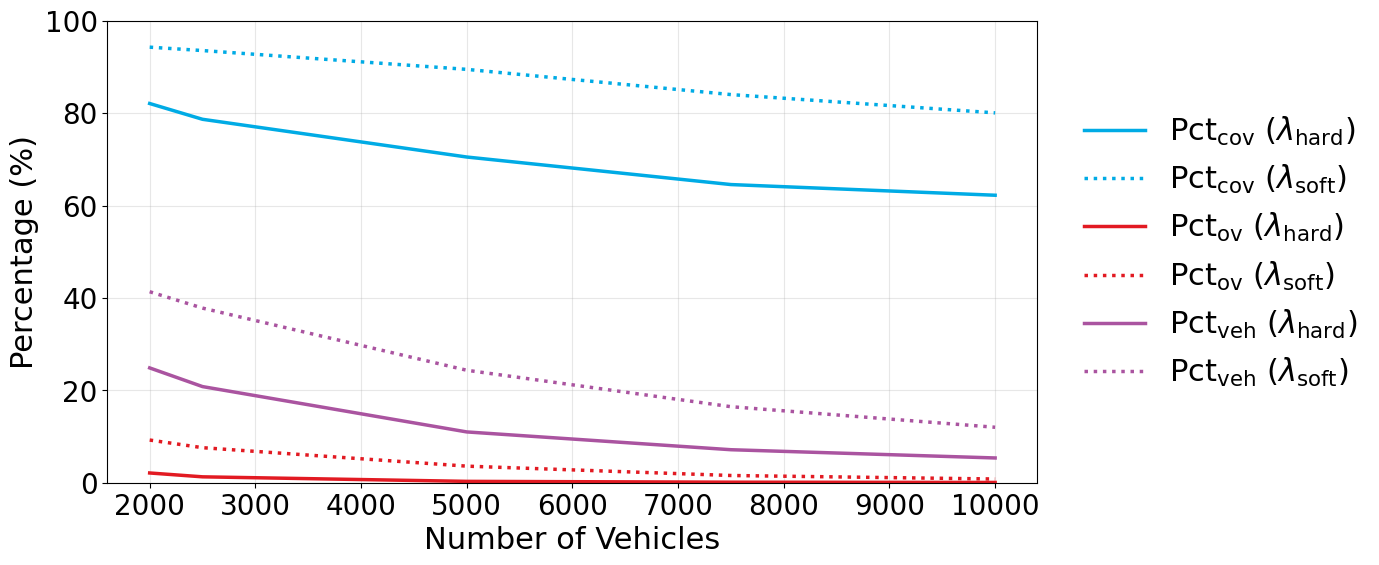}
    \caption{Impact of $\lambda$ on Coverage, Overlap, and Selected Vehicles (radius = 5 km)}
    \label{fig:pct}
\efig

\paragraph{Penalty regimes and scaling:}
Fig.~\ref{fig:pct} focuses on the large-scale scenario (radius $=5\,\mathrm{km}$) and illustrates how the three key performance indicators ($\mathrm{Pct}_{\mathrm{cov}}, \mathrm{Pct}_{\mathrm{ov}}, \mathrm{Pct}_{\mathrm{veh}}$) evolve as the fleet size increases under soft and hard penalty regimes. As the number of vehicles grows, coverage and the fraction of selected vehicles decrease steadily, as a larger number of vehicles must share the same road infrastructure, while overlap remains consistently low. The $\lambda_{\text{soft}}$ regime maintains higher coverage and assigns more vehicles across all scales, whereas $\lambda_{\text{hard}}$ achieves lower overlap at the cost of reduced coverage and vehicle selection. The overlap under $\lambda_{\text{hard}}$ stays close to zero even for very large fleets, again, indicating strong congestion control.

From a practical perspective, this behavior suggests a useful planning tool. Given a desired coverage level or an acceptable overlap threshold, the curves can be used to estimate how many vehicles can be deployed without overloading the network. In this sense, the proposed framework supports not only optimization but also capacity planning.

\paragraph{Structural and distributional traffic metrics:}
Beyond aggregate coverage and overlap, additional metrics reveal how traffic is structurally distributed across the network. The overlap graph density $\delta$ and average degree $\bar d$ indicate how strongly selected routes interact. In the small networks with radius $1\,\mathrm{km}$, $\delta$ typically ranges from $0.37$ to $0.43$, with $\bar d$ growing from roughly $80$ (200 vehicles) to more than $420$ (1,000 vehicles), whereas in the $5\,\mathrm{km}$ cases $\delta$ drops to around $0.06$ and $\bar d$ remains near $125$--$160$, reflecting sparser interactions.

Traffic concentration metrics further clarify how overlap is distributed spatially. For weak penalties, the normalized Shannon entropy $H_{\mathrm{norm}}$ remains close to one and HHI values are very small, indicating nearly uniform usage. As penalties strengthen, entropy decreases slightly, and HHI increases modestly, accompanied by higher average node overlap $\bar u$ and larger top 10\% overlap shares $S_{10}$. Importantly, these changes indicate controlled concentration rather than extreme congestion, confirming that improved coverage is achieved without collapsing traffic onto a small set of nodes.

In practical terms, this means that after optimization, the traffic is spread relatively evenly over the network. Stronger penalties introduce some concentration on high-utility nodes, but without creating dominant bottlenecks or congestion hotspots, indicating a well-balanced post-optimization network structure.

\section{Conclusion}
\label{sec:conclusion}
This paper formulated Multi-Agent Route Planning as a vehicle-route selection problem that maximizes route-level coverage utility of an urban road network while explicitly penalizing redundant overlaps between predefined routes. We provided a formal problem definition, established NP-hardness via a polynomial-time reduction from Weighted Set Packing for the general set-based formulation, and derived an interpretable QUBO model whose linear terms encode route utility rewards and whose quadratic couplings encode pairwise overlap penalties. By distinguishing soft and hard penalty regimes for the trade-off parameter $\lambda$, we showed how the same formulation supports both exploratory multi-objective analysis and near-disjoint selections. In a realistic pipeline based on OpenStreetMap networks and Valhalla-generated routes, we evaluated a Gurobi binary quadratic-programming baseline, simulated annealing, and D-Wave hybrid quantum annealing on Barcelona instances ranging up to $10{,}000$ vehicles. The experiments revealed a consistent coverage--overlap knee and indicated that Pareto-optimal solutions are predominantly achieved in the hard-penalty regime, while hybrid quantum annealing and Gurobi solvers attain very similar objective values on matching configurations with comparable scaling behavior at larger sizes.

At the same time, our formulation intentionally simplifies several aspects of real traffic systems. MaRP assumes one fixed route per vehicle, uses node-level overlap as a spatial proxy for potential congestion, and does not model temporal effects, interactions with signals, or capacity constraints. These simplifications are deliberate: they isolate the core combinatorial structure and enable scalable optimization, but they also highlight directions for extending the model toward richer, time-aware, and capacity-aware variants.

Practically, the proposed framework supports fleet sizing and dispatch decisions in logistics or ride-hailing. It can also serve as a decision-support tool for scenario analysis, allowing planners to quantify how many vehicles can be deployed for a desired coverage level under an acceptable redundancy threshold.

\section*{ACKNOWLEDGMENT}
This work was funded by the EU NextGenerationEU through the Recovery and Resilience Plan for Slovakia under project No.~17R05-04-V01-00004 (Competence and Training Center for Cyber and Information Security, TU~Košice). 

During the preparation of this article, the authors used an AI-based writing assistant to improve language and readability; the ideas and content presented are solely the authors' responsibility.

The implementation code is stored in a private GitHub repository. Access can be provided by the corresponding author upon reasonable request.


\end{multicols}
\end{document}